\date{}
\newtheorem{lemma}{Lemma}
\newtheorem{remark}{Remark}
\newtheorem{assumption}{Assumption}
\newcommand{\dff}{\stackrel{\scriptscriptstyle\triangle}{=}}
\DeclareMathOperator*{\argmax}{arg\,max}
\def \A{{\cal A}}
\def \E{{\cal E}}
\def \H{{\cal H}}
\def \bP{\mathbb{P}}
\def \bE{\mathbb{E}}
\def \bN{\mathbb{N}}
\begin{document}
\title{Reinforcement Learning-Based Coverage Path Planning with Implicit Cellular Decomposition }
%
%
\author{Javad Heydari \and Olimpiya Saha \and Viswanath Ganapathy \thanks{All authors are with the LG Electronics Advanced AI Team, Santa Clara, CA 95054.} \thanks{email: \tt{khormizi@gmail.com, osaha@unomaha.edu, gviswa@gmail.com.}}}

\maketitle              
\begin{abstract}
Coverage path planning in a generic known environment is shown to be NP-hard. When the environment is unknown, it becomes more challenging as the robot is required to rely on its online map information built during coverage for planning its path. A significant research effort focuses on designing heuristic or approximate algorithms that achieve reasonable performance. Such algorithms have sub-optimal performance in terms of covering the area or the cost of coverage, e.g., coverage time or energy consumption. In this paper, we provide a systematic analysis of the coverage problem and formulate it as an optimal stopping time problem, where the trade-off between coverage performance and its cost is explicitly accounted for. Next, we demonstrate that reinforcement learning (RL) techniques can be leveraged to solve the problem computationally. To this end, we provide some technical and practical considerations to facilitate the application of the RL algorithms and improve the efficiency of the solutions. Finally, through experiments in grid world environments and Gazebo simulator, we show that reinforcement learning-based algorithms efficiently cover realistic unknown indoor environments, and outperform the current state of the art.

\end{abstract}

\section{Introduction}

Autonomous robots are ubiquitous nowadays and expected to grow in popularity and deployment well into the future. In some application domains, such as vacuum cleaning, lawn mowing, and painting, it is of interest for an intelligent autonomous robot to be able to cover the entirety of a given environment~\cite{chen2013path,einecke2018boundary,ismail2015landmines,lehnert2018sweet,prabakaran2018floor}. Hence, the robot should be able to plan a path so as to visit the entire environment not occupied by an obstacle, while at the same time controlling some notions of cost such as revisiting already covered areas, i.e., overlap, coverage time, number of turns, or energy consumption~\cite{galceran2013survey,cabreira2019survey,Pragnavi2020}.

Coverage path planning (CPP) algorithms, based on their knowledge of the environment, in terms of the size and the shape of the environment, the number of obstacles, and the obstacle locations, can be categorized into offline and online methods. The offline CPP problem, where the robot has full geometric description of the area of interest, is shown to be NP-hard~\cite{Arkin2000}. Many approximation or heuristic algorithms have been proposed, such as the boustrophedon or Morse decomposition based coverage algorithms~\cite{choset1998coverage,Mannadiar2010,Morse2002,latombe2012robot}, the spiral path coverage~\cite{Gonzalez2005}, and the spanning-tree based coverage~\cite{Gabriely2001}. Further improvements in coverage have been explored using genetic algorithms~\cite{jimenez2007optimal} or computing the intrinsic coordinate system using holomorphic quadratic differentials for generic 3D environments~\cite{Lin2017}. Area coverage using multiple robots have also been studied in~\cite{adepegba2016multi,fazli2010multi}.

The full-knowledge assumption is not practical, especially for commercial robots that are manufactured for general purposes such as vacuum cleaning and lawn mowing and are deployed in a diverse set of environments. In the online version of the problem, the robot has no, or only partial, knowledge about the size and geometry of the area to be covered, or shapes and locations of the obstacles. In such scenarios, the robot accumulates the knowledge of the environment over time using on-board sensors and data storage, and builds an online map of the area. The boustrophedon coverage, spiral path, and spanning tree techniques are adapted to the online version of the problem through using the online map~\cite{Choi2009, BA*, Acar2006, Hert1999, Viet2015, Pham2019}. The online coverage with energy constraint for a battery-powered robot is studied in~\cite{Wei2018, Sharma2019, Shnaps2016, dutta2019constant}. In~\cite{hassan2019ppcpp, hassan2020dec}, both online and offline maps are used at the same time, where offline map shows the location of the static obstacles, while the online map  keeps track of the moving agents/obstacles in the environment.

The performance of all the aforementioned online algorithms hinges heavily on the accuracy of the area map generated by the robot while moving around the environment. Therefore, while these algorithms are easily implementable on robotic platforms, they suffer from missing some free regions and significant overlap. On the other hand, they are universal algorithms making them unable to adapt to the environments they are being used, while in many applications, such as vacuum cleaning and lawn mowing, the robot is used in the same set of environments throughout its lifetime. To address both issues, in this paper, we propose reinforcement learning to enable the robot to learn both the model of the environment including the sensor noise model, and to adapt to the set of environments the robot is being deployed in over time.

In this paper, by accounting for the uncertainty of 1) the unknown environment, 2) the sensor noise model, and 3) robot movement model, we formulate the coverage path planning in an unknown environment as a stochastic optimization problem in the framework of optimal stopping rule. By analyzing the problem in the stochastic domain, we formalize the dynamic programming solution to the problem, for which we identify the properties of the problem that makes it amenable for computationally solving the problem by using reinforcement learning (RL). The transition from stochastic optimization to RL facilitates some of the major difficulties of applying RL for general control problems such as choosing the reward function or state representation.

Application of RL to various control tasks has grown since its success in playing Atari games~\cite{mnih2013playing}. Recently, RL has been used for different path planning tasks such as navigation~\cite{koutnik2013evolving,zhao2015stacked,oh2016control,tessler2017deep,tai2016towards,mirowski2016learning,mirowski2018learning}, localization~\cite{chaplot2018active,zhu2017target}, and mapping~\cite{zhang2017neural,parisotto2017neural}. Rl has also been applied for coverage in different contexts. For instance, the studies in~\cite{Rudolph,Seah} apply RL to control sensor networks for covering a surveillance area, and~\cite{app9224964} uses RL to control multiple UAVs for searching an area. This is in contrast to the objective of this paper in which one robot is given the task of covering an environment by moving around while avoiding the obstacles.
	
The paper's contribution is threefold: first, we formulate the CPP as a stochastic optimization problem. Then, we demonstrate that RL is a natural computational approach for solving the optimization and the reward function and state representation come naturally from the optimization problem. Finally we validate the performance of an RL agent through simulations in various maze environments as well as the physics-based Gym-Gazebo simulator~\cite{zamora2016extending}.

\section{Model and Formulation}

\subsection{Model}

We consider a typical indoor environment $\E$, which consists of multiple obstacles and its size is at most $L\times W$. We assume that the environment, including its size, the number, locations, and the sizes of obstacles, is generated randomly according to a distribution denoted by $P_E$. A robot with diameter $d$ is tasked with covering environment $\E$ by moving around. We assume that the area of the environment $\E$ that is accessible by the robot is $A_E$ and note that $A_E$ is a random variable. When the robot tries to cover the environment, it revisits some area of $\E$ multiple times resulting in overlap and increasing the coverage time. The ultimate goal of the robot is to cover the accessible area of the environment in the quickest way, which is equivalent to minimizing the overlap. In this paper, we assume that the robot has no energy constraint, i.e., its battery capacity is sufficiently large to cover the whole area in one shot. Furthermore, the robot is not required to return to the starting point after finishing the coverage.

In order to formalize this problem, we first discretize the environment into cells with size equal to the size of the robot. Without loss of generality and for convenience in notations, we assume that $d=1$ and $L,W\in\mathbb{N}$. Furthermore, we assume that the robot interaction with the environment, such as sensing, moving, and turning, is performed in discrete time steps denoted by $t\in\mathbb{N}$\footnote{$t$ can be considered as small as required in order to make the robot movement natural.}. The robot has a set of possible actions $\A$ and it takes action $a_t\in\A$ at time $t$ to navigate around the environment. In a simple grid-world environment we may have $\A=\{\text{up, down, left, right}\}$, while for a real robot, due to the physical constraint of the robot movement, we usually have $\A=\{\text{move forward, rotate left, rotate right , \dots}\}$. As the robot moves around for coverage, it collects information about the environment via its sensors including bumper sensors, LiDAR, and 3D sensors. We denote the set of sensory data at time $t$ by $Y_t$ and the data accumulated up to time $t$ by $Y^t\dff(Y_1,Y_2,\dots, Y_t)$.

As the robot moves around the environment, it accumulates more information about the unknown environment and reduces its uncertainty about the area. We denote the filtration generated by the information accumulated up to time $t$ by the $\sigma$-algebra $\H_t$, i.e.,
\begin{align}
\H_t\dff\sigma(a_1,Y_1,\dots,a_t,Y_t)\ .
\end{align}
At each time step $t$, and based on the information accumulated up to that time, i.e. $\H_t$, the robot takes one of the following two possible actions:
\begin{enumerate}
\item {\em Stopping:} the robot is confident enough that it has achieved the desirable coverage performance, and stops the navigation process.
\item {\em Moving:} due to the lack of confidence in achieving the desired coverage, the robot decides to continue the coverage process by taking at least one more step. In this case, the robot must select action $a_{t+1}\in\A$.
\end{enumerate}
We define the stopping time $\tau$ as the time instant at which the robot is confident enough about achieving the desired coverage performance and takes the {\em stopping} action. We note that $\tau$ is a random variable as it is a function of random sequences of sensor measurements and the environment randomness. Prior to the stopping time, for time $t\in\{1,2,\dots,\tau-1\}$ we define the set of action selection functions $\psi^\tau\dff\{\psi_t\}_{t=1}^{\tau-1}$, where $\psi_t$ is a $\H_t$-measurable function that specifies action $a_t$. Finally, we define $\pi\dff(\tau,\psi^\tau)$ as the coverage path planning policy.

\subsection{Problem Formulation}

The optimal policy can be determined by specifying the optimal stopping time and the action selection functions. The ultimate goal in a coverage path planning problem is to cover the environment as quickly as possible. Therefore, reasonable coverage and quickness are the main figures of merit in this coverage path planning problem. For a random environment, the coverage performance can be quantified through the probability of achieving a certain level of coverage. Toward this end, we define $C$ as the random variable denoting the coverage percentage, and $P_c^\pi(\eta)$ as the probability of achieving coverage percentage $\eta$ by policy $\pi$ at the stopping time, i.e.,
\begin{align}
P_c^\pi(\eta)\dff\bP(C>\eta\;|\;\pi)\ ,
\end{align}
where $\bP$ is the probability measure that captures the randomness in the environment model as well as the robot movements and its sensing. The other figure of merit for the CPP problem is the quickness of the process, which is captured by the average number of steps required by the robot following  policy $\pi$ to achieve the desirable coverage level, i.e., $\bE\{\tau\,|\,\pi\}$\footnote{To simplify the notations, whenever it can be inferred from the context, the policy $\pi$ in the definition of coverage and quickness measures are dropped.}. There exists an inherent tension between the coverage performance and quickness of any policy as improving the coverage by moving around the environment increases the delay. In this paper, since we are interested in achieving a certain coverage performance, we control the coverage percentage and minimize the delay. As a result, the optimal policy would be the solution to the following optimization problem:
\begin{align}\label{eq:p1}
\begin{array}{ll}
\min_\pi & \bE\{\tau\,|\,\pi\} \\
\text{s.t.} & P_c^\pi(\eta)\geq\alpha
\end{array}\ ,
\end{align}
where $\alpha\in(0,1)$ controls the balance between the coverage and delay. The coverage problem in~\eqref{eq:p1} is an optimal stopping problem and it has been shown that for any $\alpha$ there exists a positive constant $\lambda_\alpha$, for which~\eqref{eq:p1} is equivalent to
\begin{align}
\max_\pi\ P_c^\pi(\eta)-\lambda_\alpha\cdot \bE\{\tau\,|\,\pi\} \ .
\end{align} 
Here, $\lambda_\alpha$ is a function of $\alpha$ and can be considered as the delay cost, i.e. each step has a cost of $\lambda_\alpha$. Note that the policy $\pi$, generally, is a stochastic policy, i.e. it returns a probability distribution over the set of actions $\A$. However, it has been shown that for a stationary environment, the optimal solution can be achieved by a deterministic policy as well. In the remainder of this paper, we aim to solve this optimization problem via reinforcement learning (RL).

\section{Finite-Horizon Coverage}

In the finite-horizon setting, the stopping time is upper bounded by  a pre-specified horizon $T\in\mathbb{N}$, i.e. $\tau\leq T$. In the coverage problem considered in this paper, the horizon can be thought of as the battery capacity of the robot which forces the robot to cover as much as possible before draining its battery. Finite-horizon analysis provides an insight to the general solution in the infinite-horizon setting. We define $\mu_t$ as the joint posterior distribution of the environment map, $M$, the robot position $P$, and the coverage $C$, at time $t$, i.e.,
\begin{align}
\mu_t\dff\bP(M,P,C\,|\,\H_t)\ ,\quad \forall t\in\{1,\dots,T\}\ .
\end{align}
Let us define the value function of policy $\pi$, also known as the expected reward-to-go, at time $t$ by $\tilde{V}_t^{T;\pi}(\H_t)$. Then, from the optimal stopping theorem we have
\begin{align}
\tilde{V}_t^{T;\pi}(\H_t)=\max&\,\big\{\bP(C>\eta\,|\,\H_t) \, ,\, -\lambda_\alpha+\bE_\pi\{\tilde{V}_{t+1}^{T;\pi}(\H_{t+1})\,|\,\H_t\}\big\}\ ,
\end{align}
where $\bE_\pi$ is the expectation over the policy $\pi$ and next observation $Y_{t+1}$, and it computes the expected coverage performance if one more step is taken considering all the available information $\H_t$. On the other hand, $\bP(C>\eta\,|\,\H_t)$ captures the coverage performance if the process stops at time $t$.
The lemma below asserts that the optimal stopping time and the action selection rules at time $t$ depend on $\H_t$ only through $\mu_t$. First, we need the following assumption.

\begin{assumption}
Given the map of the environment, $M$, and the position of the robot, $P$, the next observation $Y_{t+1}$ is independent of the past observations , i.e.,
\begin{align}
\bP(Y_{t+1}\,|\,\H_t,M,P)=\bP(Y_{t+1}\,|\,M,P)\ .
\end{align}
\end{assumption}
This is not a restricting assumption as when the sensor noises are independent across time the assumption holds.
\begin{lemma}[Sufficient Statistics]\label{lemma:ss}
The value function $\tilde{V}_t^{T;\pi}(\H_t)$ depends on $\H_t$ only through $\mu_t$, and can be denoted as a function of $\mu_t$ by $V_t^{T;\pi}(\mu_t)$.
\end{lemma}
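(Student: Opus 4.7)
I would proceed by backward induction on $t$, from $T$ down to $1$, using the Bellman-style recursion stated immediately before the lemma. The induction hypothesis is that $\tilde{V}_{t+1}^{T;\pi}(\H_{t+1})=V_{t+1}^{T;\pi}(\mu_{t+1})$ for some measurable function $V_{t+1}^{T;\pi}$ of the belief alone. The base case $t=T$ is immediate: stopping is forced, so $\tilde{V}_T^{T;\pi}(\H_T)=\bP(C>\eta\,|\,\H_T)$, which is the $C$-marginal of $\mu_T$ and hence a function of $\mu_T$.

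For the inductive step, the stopping term $\bP(C>\eta\,|\,\H_t)$ is again the $C$-marginal of $\mu_t$. The continuation term requires showing that the conditional law of $(a_{t+1},Y_{t+1},\mu_{t+1})$ given $\H_t$ depends on $\H_t$ only through $\mu_t$. This splits into two sub-claims. First, I would establish a deterministic filter update $\mu_{t+1}=F(\mu_t,a_{t+1},Y_{t+1})$ by applying Bayes' rule to $\bP(M,P,C\,|\,\H_{t+1})$: the observation likelihood collapses from $\bP(Y_{t+1}\,|\,\H_t,M,P)$ to $\bP(Y_{t+1}\,|\,M,P)$ by the Assumption, the position transition depends only on the current $(M,P_t,a_{t+1})$, and the coverage update is a deterministic functional of the trajectory-so-far that is already encoded in $\mu_t$. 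Second, $\mu_t$ is sufficient for the one-step predictive distribution of $Y_{t+1}$: marginalizing the Assumption gives $\bP(Y_{t+1}\,|\,\H_t,a_{t+1})=\int\bP(Y_{t+1}\,|\,M,P)\,d\mu_t(M,P,C)$, a function of $\mu_t$. Combined with a standard POMDP reduction that, without loss of generality, restricts the action-selection rule to depend on $\H_t$ only through $\mu_t$ (since $\mu_t$ is $\H_t$-measurable and the Bellman maximum is attained in this class), the continuation term $-\lambda_\alpha+\bE_\pi\{V_{t+1}^{T;\pi}(F(\mu_t,a_{t+1},Y_{t+1}))\,|\,\mu_t\}$ becomes a function of $\mu_t$, and taking the max with the stopping term yields the desired $V_t^{T;\pi}(\mu_t)$.

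\textbf{Main obstacle.} The crux of the argument is the deterministic Bayes-filter update: one must carefully strip the $\H_t$-dependence from the observation likelihood (precisely where the Assumption is invoked) and separately argue that the position and coverage transitions are Markov given $(M,P_t,a_{t+1})$, so that every factor in the update consults $\H_t$ only through the current belief. A secondary, bookkeeping subtlety is that the lemma is phrased for an arbitrary policy $\pi$, whereas the $\mu_t$-sufficiency statement is most natural for belief-based policies; I would address this up front by restricting (or projecting) the action-selection rule $\psi_{t+1}$ to be a function of $\mu_t$, which is without loss of optimality for the outer stopping problem and therefore does not alter the value function that appears in the statement.
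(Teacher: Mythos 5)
Your proposal follows essentially the same route as the paper: establish the Bayes-filter recursion expressing $\mu_{t+1}$ in terms of $\mu_t$ and the new observation (using the Assumption to drop the $\H_t$-dependence of the observation likelihood), anchor the induction at the forced-stopping base case $t=T$, and conclude by backward induction. The paper explicitly omits the inductive step ``for brevity,'' so the details you supply — the one-step predictive sufficiency of $\mu_t$ for $Y_{t+1}$ and the restriction to belief-based action-selection rules — are exactly the pieces the paper leaves implicit, and they are correct.
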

\begin{proof}
First, we prove the recursive connection between $\mu_{t+1}$ and $\mu_t$ for $t\in\bN$:
\begin{align}
\mu_{t+1} &= \bP(M,P,C\,|\,\H_{t+1}) \\
&= \bP(M,P,C\,|\,\H_t,Y_{t+1}) \\
&= \frac{\mu_t\bP(Y_{t+1}\,|\,Y_t,M,P,C)}{\sum_{\mu'\sim\bP(\cdot\,|\,\H_{t})}\mu'\bP(Y_{t+1}\,|\,Y_t,M,P,C)}\ .
\end{align}
Next, we note that at time $T$ the covering process should stop and, consequently, we have
\begin{align}\label{eq:F}
\tilde V_T^{T;\pi}(\H_T)=\int_m\int_p\int_{c=\eta}^{100} \mu_T\dff F(\mu_T)\ .
\end{align}
Next, by using the backward induction and using the recursive property of sequence $\{\mu_t\}$ the proof is concluded, which we remove for the sake of brevity. 
\end{proof}

\section{Infinite-Horizon Coverage}

By assuming that there is no energy constraint, we set $T\rightarrow\infty$. Then, the limit $\lim_{T\rightarrow\infty} V_t^{T;\pi}(\mu_t)$ is well-defined according to monotonic convergence theorem as $V_t^{T;\pi}(\mu_t)$ is increasing in $T$ and is upper-bounded by $1$. By invoking Lemma~\ref{lemma:ss}, it can be readily shown that $V_t^{\infty;\pi}(\mu_t)$ is independent of time step $t$, i.e., $V_t^{\infty;\pi}(\mu)=V_{t'}^{\infty;\pi}(\mu)$ for $t\neq t'$. In order to highlight this observation, throughout the rest of this paper we use the shorthand $V^{\pi}(\mu)$ to denote this function. Thus, for the reward-to-go function in the infinite-horizon setting, we have
\begin{align}\label{eq:bellman1}
V^{\pi}(\mu_t)=\max\big\{F(\mu_t) \,,\, -\lambda_\alpha+ \bE_\pi\{V^{\pi}(\mu_{t+1})\,|\,\mu_t\}\big\} \ .
\end{align}
This is the Bellman equation, the optimal solution to which, i.e., $\pi^*=\argmax_\pi\ V^\pi(\mu)$, $\forall \mu$, is the optimal solution to the original CPP problem in~\eqref{eq:p1}. 
\begin{remark}
Function $F(\cdot)$, as defined in~\eqref{eq:F}, depends on the model of the environment which is unknown. Later in the paper, we provide some practical methods to compute $F$.
\end{remark}
The analytical solution to this problem requires the entire model of the environment and robot, which is assumed to be unknown in this paper. Hence, we use the computational techniques of the reinforcement learning literature to solve it. In the remainder of the paper, we discuss all the practical and technical considerations used to solve the CPP more efficiently.

\section{Reinforcement Learning for CPP}
\label{sec:RL}
	
Reinforcement learning (RL) handles the problem of an \textit{agent} (the robot) learning to act in an \textit{environment} (the area to be covered and its dynamics), with the goal of maximizing a predefined scalar \textit{reward} signal. Such learning problem is formulated as a Markov Decision Process (MDP). An MDP is a tuple $\langle \mathcal{S},\mathcal{A},P,R,\gamma\rangle$, where $\mathcal{S}$ and $\mathcal{A}$ are the sets of states and actions, respectively, and $P$ and $R$ are the corresponding state transition probabilities and the reward function, respectively. Constant $\gamma\in[0,1]$ is the discount factor which trades off between immediate and future reward. At each discrete time step $t$, the agent acquires an observation $s_t$ from the environment, selects a corresponding action $a_t$, then receives feedback from the environment in the form of a reward $r_t=R(s_t,a_t)$ and the updated state information $s_{t+1}$. The goal of RL agent is select policy $\pi$ to maximize the discounted sum of future rewards, i.e., $V^\pi(s_1)=\sum_{t=1}^{\tau}\gamma^t R(s_t,a_t)$, which according to the Bellman optimality principle satisfies
\begin{align}\label{eq:bellman2}
V^\pi(s_t) = \left\{\begin{array}{ll}
r_t+\gamma\cdot\bE\{V^\pi(s_{t+1})\} & t<\tau \\
r_t & t=\tau
\end{array} \right.\ .
\end{align}
Comparing~\eqref{eq:bellman1} and~\eqref{eq:bellman2} suggests $s_t=\mu_t$, $\gamma=1$, and
\begin{align}\label{eq:reward}
r_t = \left\{\begin{array}{ll}
-\lambda_\alpha & t<\tau \\
F(\mu_t) & t=\tau
\end{array} \right.\ .
\end{align}
\begin{remark}
When formalizing the problem, we assumed discrete time steps with equal duration which led to the constant reward of $-\lambda_\alpha$ for each action prior to the stopping time. Following a similar line of argument, for real robots we can define $-\tilde\lambda(a_t)$ to account for the varying delays incurred by different actions. For instance, the delay of ``move forward" action might be different from the one for ``turn left/right" actions.
\end{remark}
Therefore, the CPP problem is amenable to RL and we can apply standard RL techniques to solve the problem computationally by deploying the robot for covering the environment and collecting its data for learning. In this paper, we use deep q-network (DQN)~\cite{mnih2015human} which combines Q-learning~\cite{watkins1992q} with deep neural networks as function approximators (for more details refer to~\cite{mnih2015human}). DQN uses experience replay~\cite{thrun1993issues}, meaning that observed transition tuples $\langle s_t,a_t,r_t,s_{t+1} \rangle$ are stored in a memory  buffer and later sampled uniformly to update the network. To improve the data efficiency of DQN and expedite its convergence, it is shown in~\cite{schaul2015prioritized} that, instead of uniform sampling, the more surprising transitions should be sampled more frequently, a method which is called prioritized experience replay (PER). Rainbow-DQN, which demonstrates superior performance in comparison to other DQN variants in several Atari games~\cite{hessel2018rainbow}, combines some of the best approaches to improve DQN like double Q-learning~\cite{van2016deep}, PER~\cite{schaul2015prioritized}, dueling architecture~\cite{wang2015dueling}, multi-step learning, distributional reinforcement learning~\cite{dabney2018distributional} and noisy nets~\cite{fortunato2017noisy}. In this paper, we use a double DQN with dueling architecture and PER.

\section{Practical Considerations}

In order to implement the RL algorithms and improve the convergence rate of the learning algorithms, we propose to simplify/approximate the state representation, modify the reward function through reward shaping, and use general advantage estimation technique to reduce gradient variance in gradient descent approach. In this section, we discuss these practical considerations one-by-one.

\subsection{State Representation}

For the state representation, since we do not have knowledge of the environment model, we cannot calculate the joint posterior distribution $\mu_t$. Hence, in this paper, we keep the estimates for the environment map, the location of the robot, and the coverage as the state. To this end, depending on whether the size of the environment is known or not, we consider two different cases.

\paragraph{Area of the environment is known:}
\label{subsubsec:known}

The dimensions of the state is  proportional to the area of the indoor environment. We stack $3$ matrices of size $n_1\times n_2$  into an $n_1\times n_2\times 3$ tensor. The first matrix records the cells that have been covered so far. The second matrix represents the obstacle locations detected by the camera or 3D sensors. Specifically, its entries are all zeros except in the locations of obstacles which are set to $1$. The third matrix captures the location of the robot, i.e., it is all zeros except for the cell that the robot is present, which is set to $1$. 
	
\paragraph{Area of the environment is unknown:}
\label{subsubsec:unknown}

We choose a dimension $n \times n$ which may be smaller than the area of the environment and $n$ is a hyper-parameter of the network. We stack the same $3$ matrices as before, except that when the size of detected area is greater than $n\times n$, we resize it to fit into $n\times n$ and apply interpolation (cubic spline in this case). The main difference would be the fact that the robot needs to store the actual-size states in the memory and prior to each call to the policy apply the resizing. 

\subsection{Reward Shaping}
\label{subsec:reward}

The reward function defined in~\eqref{eq:reward} guides the robot toward achieving the ultimate goal of covering the environment. However, since the main reward, which accounts for the coverage performance, is given at the stopping time, the convergence of RL algorithms will be slow. To address this issue, we use reward shaping technique~\cite{rewardshaping} and define 
\begin{align}\label{eq:reward2}
r_t = \left\{\begin{array}{ll}
-\lambda_\alpha+F(\mu_{t+1})-F(\mu_t) & t<\tau \\
F(\mu_t)-F(\mu_{t-1}) & t=\tau
\end{array} \right.\ .
\end{align}
On the other hand, since function $F$ is unknown, we use an approximation method. To this end, the reward of visiting an uncovered cell is set to $F(\mu_{t+1})-F(\mu_t)=+1$, whereas the reward of visiting a covered cell, i.e. an overlap, is $-\lambda$, where $\lambda$ controls the balance between coverage and overlap (or coverage time). In all the experiments we set $\lambda=0.5$. Each episode will end when either the predefined coverage level is reached or the number of the steps exceeds a threshold.

\subsection{Architecture of the DQN}
\label{subsec:architecture}

The deep neural network consists of two convolutional layers followed by two fully-connected layers.
The first convolutional layer has $16$ filters and the second one has $32$ filters, both of size $3\times 3$ and with a stride of $1$ followed by Leaky ReLU activation. The final hidden layer consists of $64$ rectifier units and the output layer has $|A|$ rectifier units corresponding to valid actions of the agent. The set of valid actions are set to (up, down, left, right) for the agent in a grid-world environment and (move forward, rotate left, rotate right) for the robot in AWS Robomaker Simulator.

We use the Adam Optimizer \cite{kingma2014adam} with mini-batches of size $32$ in all experiments. The behavior policy during training is $\epsilon$-greedy with $\epsilon$ annealed linearly from 1 to 0.1 over the first $10,000$ steps, and fixed at $0.1$ thereafter. Huber loss is used when training our DQN agent. The target network is updated every $8,000$ steps. The discount factor $\gamma=0.99$ and the learning rate is $0.001$. For PER, we set priority parameter $\alpha=0.6$ and anneal the importance sampling weight parameter $\beta$ linearly from $0.4$ to $1$. For off-policy A2C, we train the model every $10$ episodes for $50$ epochs. We set the generalized advantage estimation parameters $\lambda=0.95$ and $\gamma=0.99$. The actor learning rate is $0.0001$ and the critic learning rate is $0.0003$.

\subsection{Hybrid RL}

Besides the pure RL algorithm proposed in the previous section, we also introduce a hybrid RL approach. Before jumping to the details, we explain BA* approach~\cite{BA*}. In BA*, the robot starts covering the free areas of the environment by using a simple zigzag movement. When it reaches to a point where there is no free area surrounding the robot, it re-positions itself to some free area identified while performing the zigzag movement. Since there can be multiple such areas as well as multiple possible paths to those free regions, BA* deploys the A* search to find the closest one. The main issues with this approach are as follows;
\begin{enumerate}
    \item The noisy sensors make the online generated map unreliable; hence rendering the A* result sub-optimal.
    \item A* identifies the best path in a greedy approach. It only considers the nearest free region (in terms of the path length). However, there may be other regions which are farther but have more free area to cover when the robot moved there. Also, The next re-positioning after the current one can be also less costly.
\end{enumerate}

Hence, we propose Hybrid RL, an algorithm that combines simple zigzag movement with an RL agent. Specifically, RL replace A* search in the BA* algorithm. If the robot is used in the same environment multiple times, e.g., a vacuum cleaner is used in the same household its entire lifetime, the robot can learn through trial and error to identify the optimal re-positioning location as well as the path to it. We use the same reward function and state representation for the Hybrid RL algorithm. The only difference in that the robot runs the zigzag movement as long as free area is available in its surrounding and collects the reward of it. 

The main gain of the Hybrid RL algorithm comes from reducing the space of all possible policies by constraining the movement of the robot to be zigzag when faced with free regions. This, in turn, leads to improved performance in terms of coverage and overlap, as well as reducing the training time and the sample complexity of the learning process. Our experiments showcase the superiority of the Hybrid RL algorithm in both aspects.

\section{Convergence Analyses of DQN-PER}

In this section we provide a proof for the improved convergence of DQN-PER compared to DQN. To this end, we start with the update rule for DQN with experience replay:
\begin{align}\label{eq:bellman}
    \theta_{t+1} = \theta_t + \alpha \mathbb{E}_{(s,a)\sim{\rho_t}}\{(\mathcal{T}^*Q_{\theta_t}(s,a)
-Q_{\theta_t}(s,a))\nabla_{\theta_t}Q_{\theta_t}(s,a)\} \ ,
\end{align}
where ${\rho_t}$ is the distribution according to which we sample from the experience replay and $\mathcal{T}^*$ is the optimal Bellman operator defined by
\begin{align}
    \mathcal{T}^*Q_{\theta_t}(s,a) = \mathbb{E}_{s'\sim T(\cdot|s,a)}\{r(s,a)+\gamma\max_{a'}\ Q_{\theta_t}(s',a')\}\ .
\end{align}
The only distinction between DQN and DQN-PER is ${\rho_t}$. In order to analyze the convergence properties of~\eqref{eq:bellman}, we apply Taylor series expansion to $Q_{\theta_{t+1}}$, from which we obtain
\begin{align}
    Q_{\theta_{t+1}}=Q_{\theta_t}+\alpha K_{\theta_t} D_{\rho_t} (\mathcal{T}^* Q_{\theta_t}-Q_{\theta_t}) + O(\|\theta_{t+1}-\theta_t\|^2) \ ,
\end{align}
where $Q_{\theta_t}\in\mathbb{R}^{|\mathcal{S}||\mathcal{A}|}$, and $D_{\rho_t}$ and $K_{\theta_t}$ is $|\mathcal{S}||\mathcal{A}| \times |\mathcal{S}||\mathcal{A}|$ matrices where $D_{\rho_t}$ is diagonal with entries being ${\rho_t}(s,a)$ and the entries of $K_{\theta_t}$ are defined by
\begin{align}
    K_{\theta_t}(s,a,s',a') = \nabla_{\theta_t}Q_{\theta_t}(s,a)^T \nabla_{\theta_t}Q_{\theta_t}(s',a')\ .
\end{align}
By selecting $\alpha$ small enough we can neglect the second order term $O(\|\theta_{t+1}-\theta_t\|^2)$. Since we want to analyze the impact of experience replay, i.e. $D_{\rho_t}$, we drop $K_{\theta_t}$ as it captures the behavior of the function approximator, and define operator $\mathcal{U}_t$ as follows:
\begin{align}
    \mathcal{U}_tQ=Q+\alpha D_{\rho_t} (\mathcal{T}^*Q-Q)\ .
\end{align}
Now, we show that the sequence $\{\mathcal{U}_t\}$ is the contraction and analyze the impact of $D_{\rho_t}$ on its convergence.
\begin{align}
    [\mathcal{U}_t Q_t - \mathcal{U}_t Q^*](s,a) = & \ (1-\alpha\rho_t(s,a))(Q_t(s,a)-Q^*(s,a))\\
    & + \alpha\rho_t(s,a)(\mathcal{T}^* Q_{\theta_t}(s,a)-\mathcal{T}^* Q^*(s,a)) \\
    \leq\ & (1-\alpha\rho_t(s,a))\|Q_t-Q^*\|_\infty\\
    & + \alpha\gamma\rho_t(s,a)\|Q_t-Q^*\|_\infty \\
    =\ & [1-(1-\gamma)\alpha\rho_t(s,a)]\|Q_t-Q^*\|_\infty \ .
\end{align}
By taking the maximum of both sides over all state-action pairs we obtain
\begin{align}
    \|\mathcal{U}_t Q_t - \mathcal{U}_t Q^*\|_\infty \leq (1-(1-\gamma)\alpha\rho_t^{\rm max})\|Q_t-Q^*\|_\infty \ .
\end{align}
Then, if $\rho_t(s,a) > 0, \forall(s,a)$ and $\alpha\in(0,1/\rho_t^{\rm max})$ where $\rho_t^{\rm max}= \max_{(s,a)}\ \rho_t(s,a)$, all $\mathcal{U}_t$ are contraction mappings and their composition converges to their common fixed point $Q^*$.

Now, let us define $\beta_t=1-(1-\gamma)\alpha\rho_t^{\rm max}$. Then, we have
\begin{align}\label{eq:fixed_point}
    \|\mathcal{U}_t Q_t - Q^*\|_\infty \leq \prod_{s=1}^t \beta_s \|Q_0-Q^*\|_\infty\ ,
\end{align}
where the condition $\rho_t(s,a) > 0$ for all $(s,a)$ ensures that $\beta_t<1$. From~\eqref{eq:fixed_point} it is clear that the rate of convergence depends on the values of $\beta_t$, and consequently $\rho_t^{\rm max}$; the farther the value of $\rho_t^{\rm max}$ from zero, the faster the convergence of the iteration. For a DQN with uniform sampling of the experience replay, the parts of the state-action space that have been visited less often are sampled less frequently, i.e., $\min_{(s,a)} \rho_t(s,a) \ll \max_{(s,a)} \rho_t(s,a)$. However, PER makes sure that all the state-action pairs are sampled often enough irrespective of their empirical probability in the replay memory buffer, i.e., it increase $\min_{(s,a)} \rho_t(s,a)$ such that $\min_{(s,a)} \rho_t(s,a)$ and $\max_{(s,a)} \rho_t(s,a)$ are close enough. Since the convergence rate of the operator composed from the sequence $\{\mathcal{U}_t\}$ is only affected by $\min_{(s,a)} \rho_t(s,a)$, it will improve significantly as it will be shown in the Simulation Section.

\section{Experiments}
\label{sec:experiments}

In this section, we evaluate the performance of our proposed RL-based coverage algorithm in both grid-world environments and AWS Robomaker simulator with a Turtlebot3 Waffle Pi robot. We compare the performance of DQN and its variants with Full Spiral-STC~\cite{Choi2009}, Smooth Spiral-STC~\cite{lee2010complete}, BA*~\cite{BA*}, BSA~\cite{gonzalez2003bsa}, Epsilon*~\cite{SongG18}, and AD Path~\cite{chen2019adaptive} which are state-of-the-art or recently proposed techniques. Examples of the grid-world and Gazebo environments used for performance evaluation are shown in figures~\ref{fig:maze_envs} and~\ref{fig:gazebo_envs}, respectively. In all simulations, we set the goal coverage performance to $90\%$.

\begin{figure}[t]
\centering
\begin{tabular}{cccc}
\includegraphics[width=0.25\linewidth]{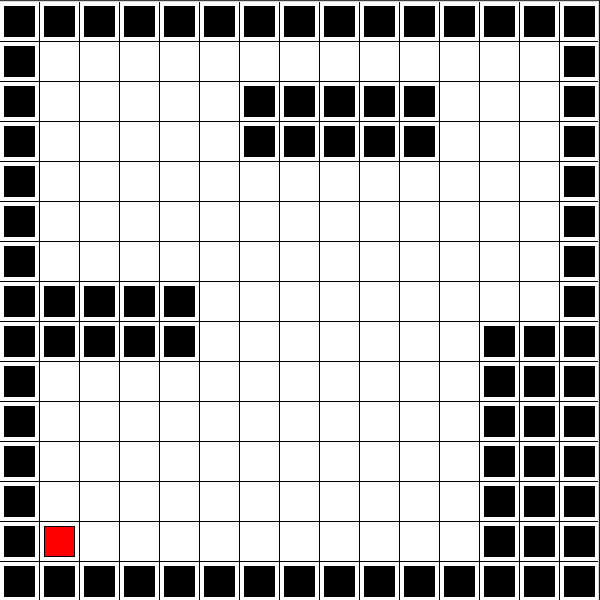} &
\includegraphics[width=0.25\linewidth]{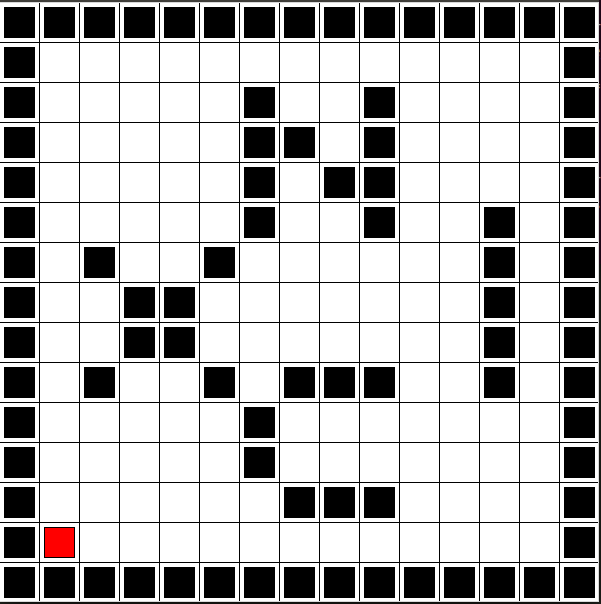} &
\includegraphics[width=0.25\linewidth]{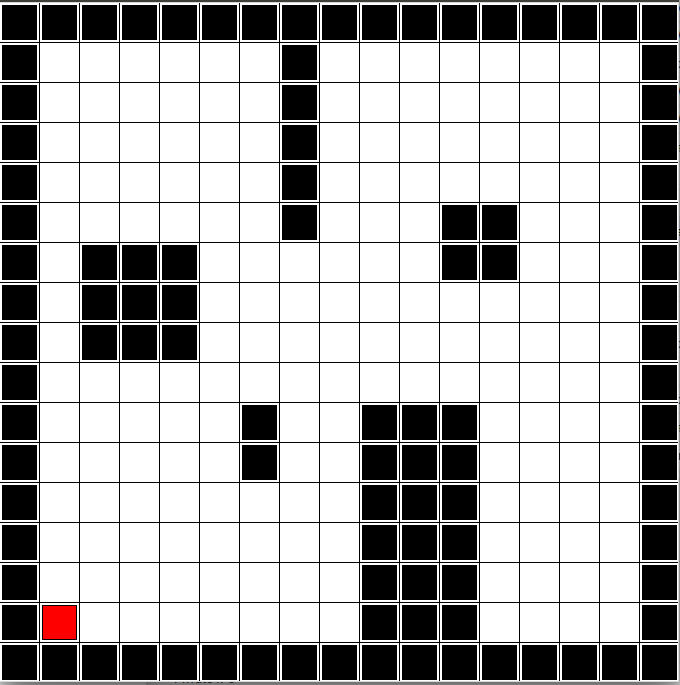} \\
(a)$15$x$15$ maze $1$& (b) $15$x$15$ maze $2$ & (c) $17$x$17$ maze $1$\\
\includegraphics[width=0.25\linewidth]{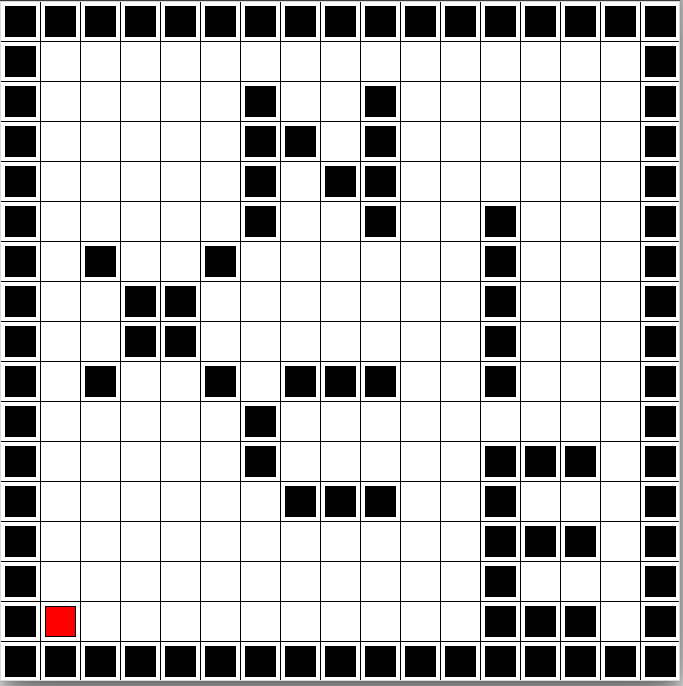} &
\includegraphics[width=0.25\linewidth]{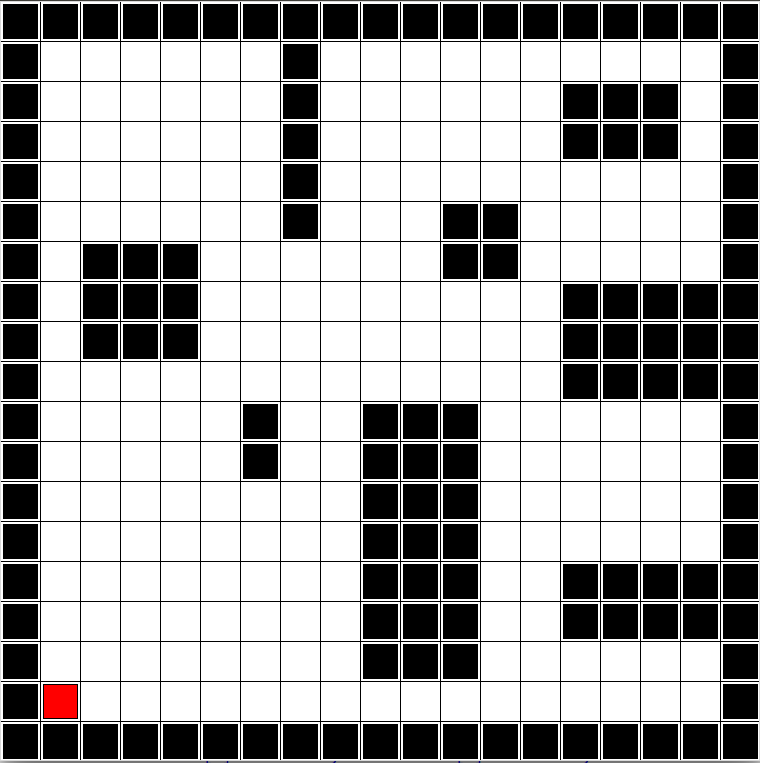} &
\includegraphics[width=0.25\linewidth]{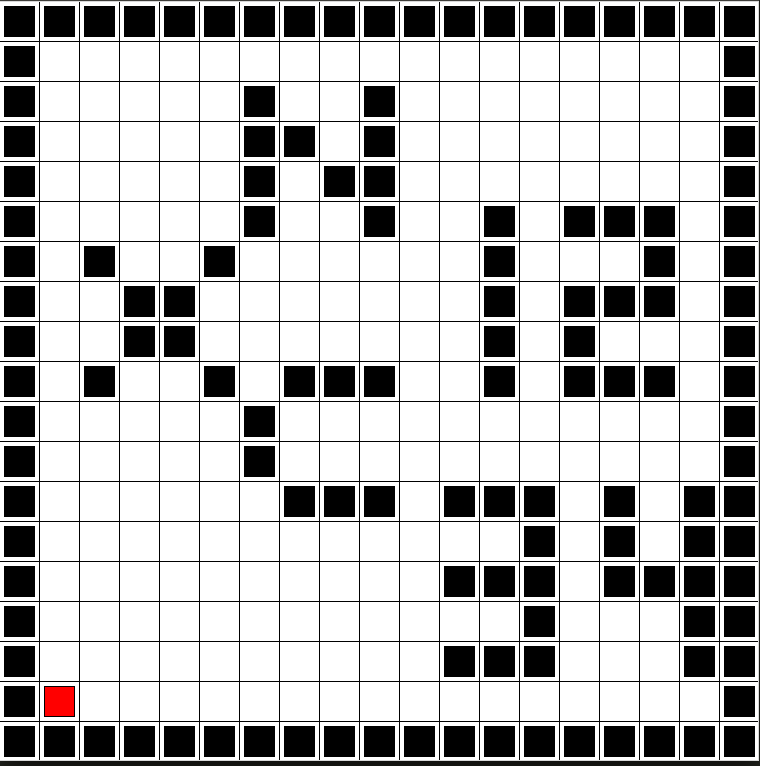} \\
(d) $17$x$17$ maze $2$& (e) $19$x$19$ maze $1$ & (f) $19$x$19$ maze $2$\\
\end{tabular}
	\vspace{-0.1in}
\caption{Example grid-world environments: white cells are free space, black cells are obstacles, and the red cell is the starting position.}
	\vspace{-0.1in}
\label{fig:maze_envs}
\end{figure}

\begin{figure}[t]
\centering
\begin{tabular}{ccc}
\includegraphics[width=0.25\linewidth]{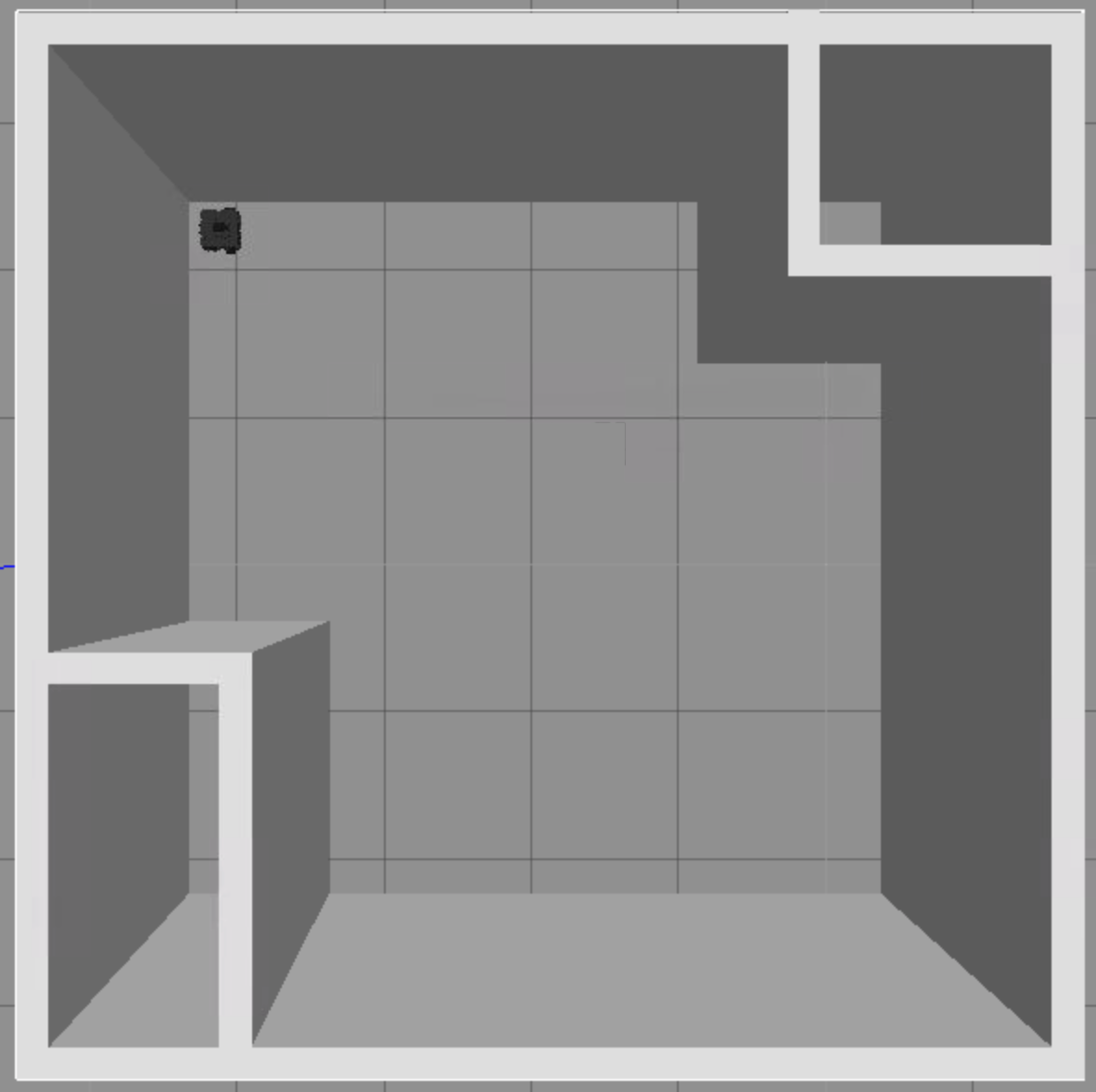} &
\includegraphics[width=0.25\linewidth]{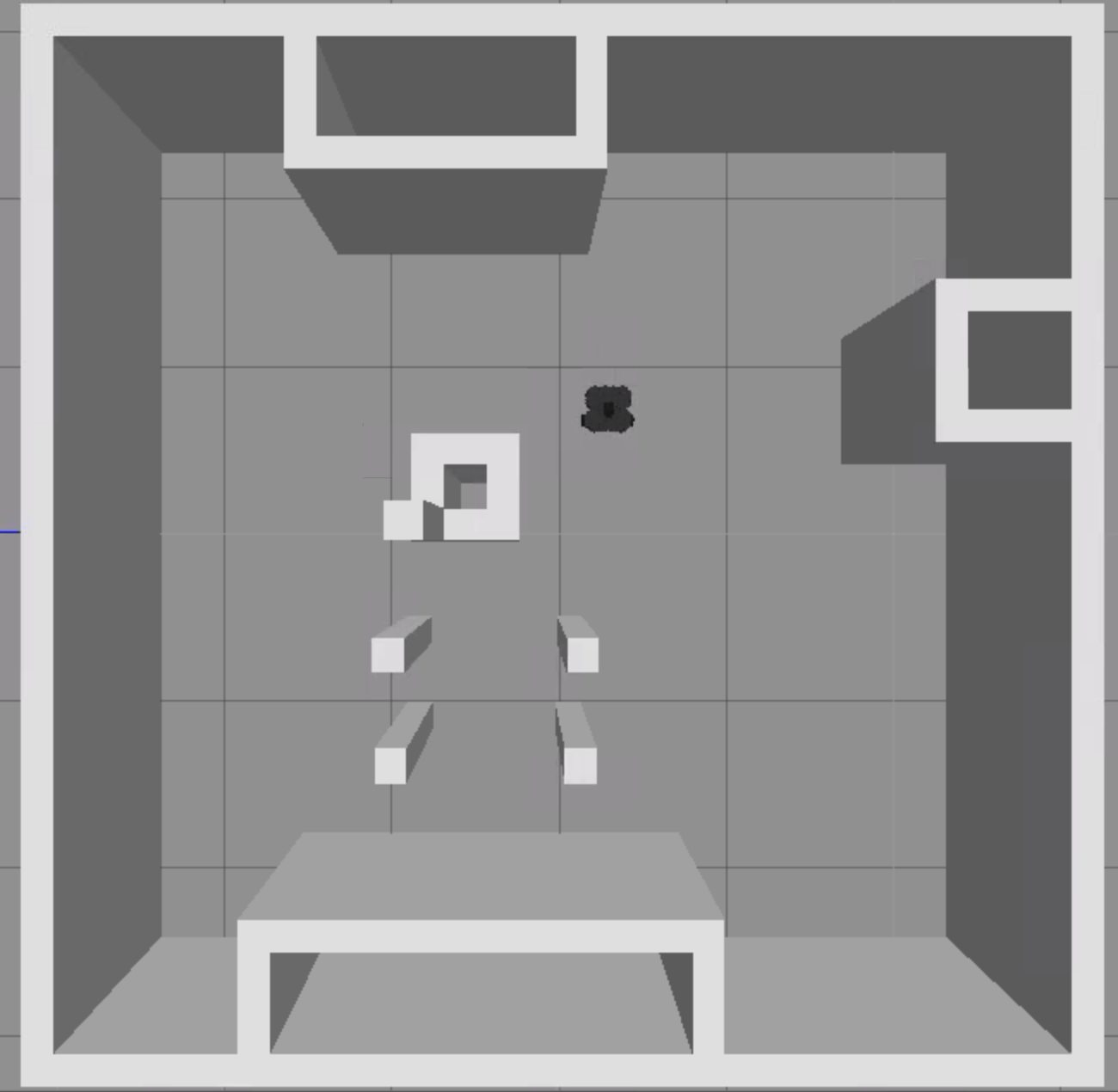} &
\includegraphics[width=0.25\linewidth]{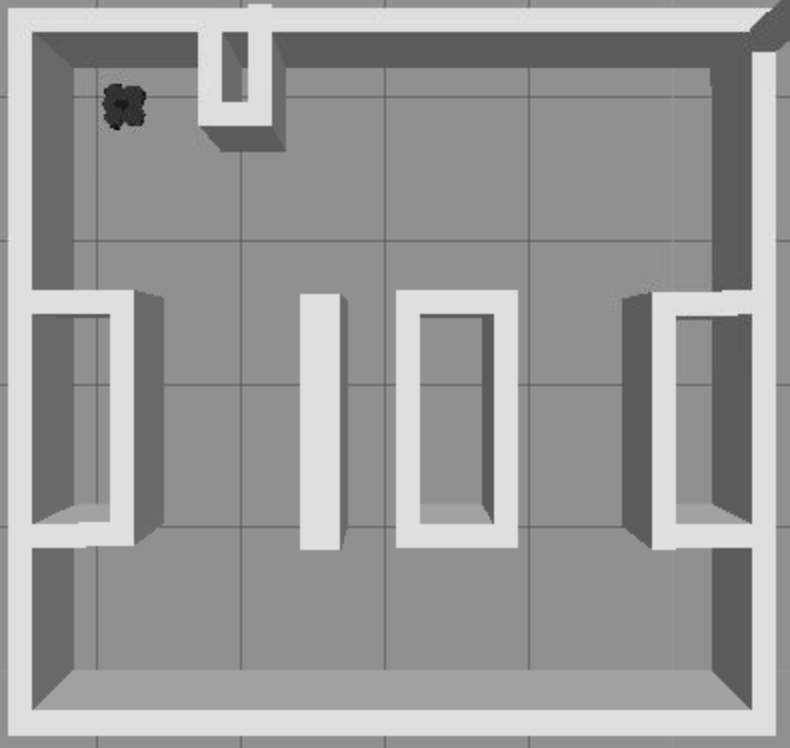}\\
(a) Env $1$& (b) Env $2$ & (c) Env $3$ \\ 
\end{tabular}
\vspace{-0.1in}
		\caption{Example indoor Gazebo environments.}
\vspace{-0.1in}
		\label{fig:gazebo_envs}
\end{figure}

\subsection{Simulations in Maze Environments}
\label{subsec:sim:maze}

\subsubsection{Comparison of Different RL Algorithms}
\label{subsubsec:rl_comp}

We evaluate the performance of different RL algorithms in the environments shown in Fig.\ref{fig:maze_envs}. Fig.~\ref{fig:known} illustrates the results in these environment. Each curve is the average of $10$ independent runs (model is randomly initialized for each run), with the variations shown in lighter colors. The performance of DQN algorithm in these environments without dynamic obstacles serve as the baseline (shown as {\fontfamily{qcr}\selectfont DQN}). It is observed that all three RL algorithms achieve the desired coverage ($90\%$), while learning to reduce overlap. Note that the performance of DQN agent does not degrade in the presence of dynamic obstacles. Moreover, the RL agent employing DQN with PER substantially reduces the sample complexity and the overlap, while indicating less variation in performance. These experiments establish the ability of RL agents to achieve the desired coverage in unknown environments with varying configurations of static and dynamic obstacles.

\begin{figure}[t]
\centering
\begin{tabular}{cc}
\includegraphics[width=0.45\textwidth]{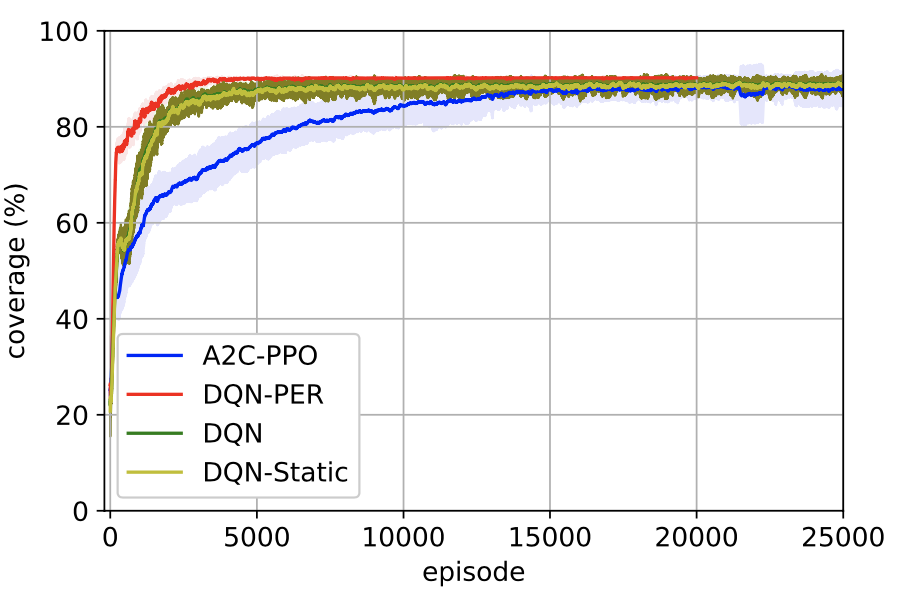} &
\includegraphics[width=0.45\textwidth]{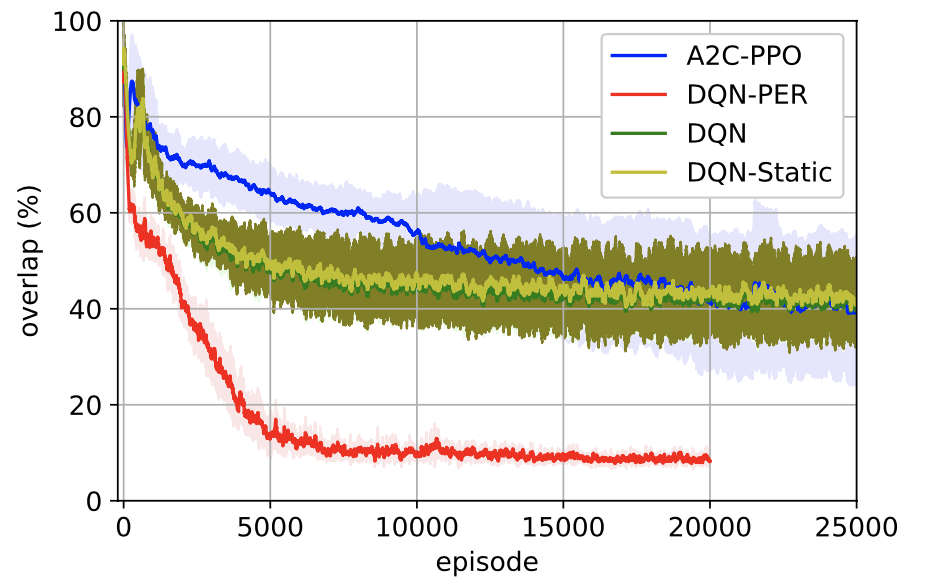}\\
(a) & (b) \\
\end{tabular}
\vspace{-0.1in}
		\caption{Average coverage (a) and overlap (b) of the agent trained on the environment in Fig.~\ref{fig:maze_envs}~(b).}
\vspace{-0.1in}
		\label{fig:known}
\end{figure}

Further, we showcase the gains we obtain by using a Hybrid RL model instead of pure RL by comparing their overlap performance on the large environment of Figure~\ref{fig:maze_envs} (f) for a constant $90\%$ coverage. It's observed in Figure~\ref{fig:hybrid} that Hybrid RL achieves the desired coverage with much lower overlap ($\approx 40\%$ compared to $\approx 12\%$), and the training process converges much faster and it's more sample efficient.

\begin{figure}[t]
\centering
\includegraphics[width=0.85\textwidth]{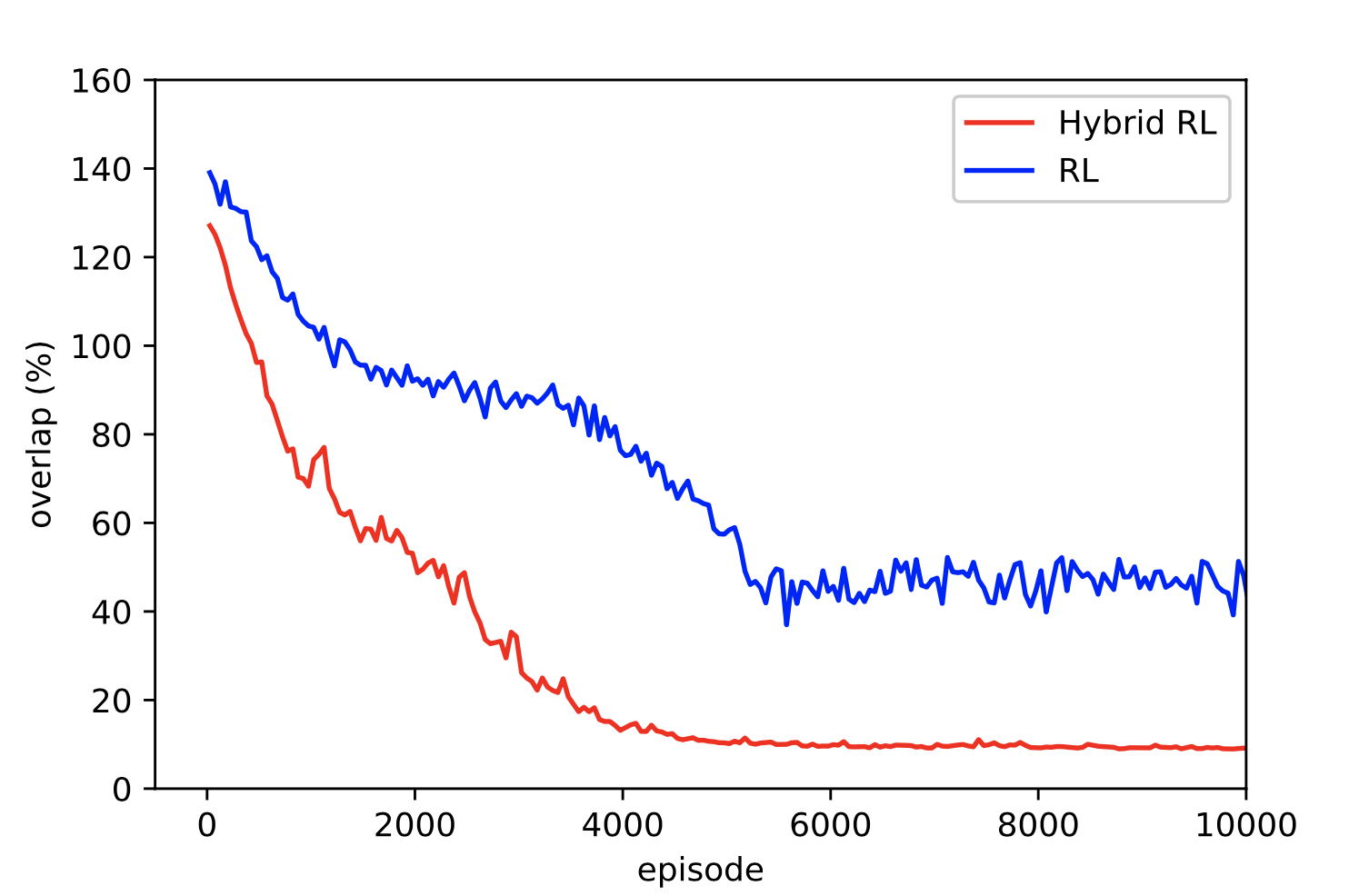} 
\caption{Average coverage (a) and overlap (b) of the agent trained on the environment in Fig.~\ref{fig:maze_envs}~(b).}
\label{fig:hybrid}
\end{figure}

\subsubsection{Comparison of RL with State of the Art Algorithms}
\label{subsubsec:_comp}

As our benchmark, we consider state of the art algorithms such as Full Spiral-STC~\cite{Choi2009}, Smooth Spiral-STC~\cite{lee2010complete}, BA*~\cite{BA*}, BSA~\cite{gonzalez2003bsa}, Epsilon*~\cite{SongG18}, and AD Path~\cite{chen2019adaptive}, where we set the desired coverage to $90\%$ and compare their overlap. Table~\ref{tab:cpp_comp} illustrates the comparison of different state of the art CPP algorithms in terms of overlap percentage. It's observed that RL and hybrid RL both outperform the current algorithms significantly in terms of overlap and hybrid approach reduces overlap of RL by approximately $20\%$.

\begin{table}
\centering
\caption{Comparing overlap ($\%$) of state of the art CPP algorithms.}
\begin{tabular}{l|cccccc}
    Method & Env 1 & Env 2 & Env 3 & Env 4 & Env 5 & Env 6 \\
    \hline
    RL & 9.1 & 9.6 & 9.7 & 10.1 & 10.0 & 10.8 \\
    Hybrid RL & $7.1$ & $7.5$ & $7.6$ & $8.0$ & $8.1$ & $8.8$ \\
    BA*\cite{BA*} & $18.3$ & $22.1$ & $21.8$ & $28.9$ & $26.3$ & $34.8$ \\
    ADP~\cite{chen2019adaptive} & $17.4$ & $20.0$ & $23.0$ & $25.5$ & $26.3$ & $36.8$ \\
    BSA~\cite{gonzalez2003bsa} & $16.4$ & $23.3$ & $22.2$ & $27.1$ & $27.4$ & $37.1$ \\
    Epsilon*~\cite{SongG18} & $23.3$ & $29.3$ & $32.1$ & $35.2$ & $34.6$ & $41.6$ \\
    Spiral STC~\cite{Choi2009} & $19.7$ & $24.2$ & $22.8$ & $29.4$ & $29.2$ & $37.4$ \\
\end{tabular}
\label{tab:cpp_comp}
\end{table}

\subsubsection{Impact of Sensor Noise}

We compare the performance of the proposed PER algorithms with the existing state of the art BA* as the benchmark~\cite{BA*}. 
Figure~\ref{fig:noise} (a) compares the overlap incurred by PER and BA* while covering $90\%$ of the $15\times15$ grid-world in Figure~\ref{fig:maze_envs} (b), which indicates the significant improvement achieved by PER.
As mentioned earlier, the main shortcoming of the existing online algorithms is that their performance degrades significantly when the sensor data are noisy. To compare the performance of PER with BA* in a noisy environment, we assume that the obstacle detection sensors have error probability $\rho$, meaning that an obstacle (free cell) is identified as a free cell (respectively obstacle) with probability $\rho$. Hence, in Figure~\ref{fig:noise} (b), we compare the overlap percentage of both methods (for PER, we use the converged value) after covering $90\%$ of the $15\times15$ grid-world in Figure~\ref{fig:maze_envs} (b) in a noisy environments. It is observed that PER is capable of learning the noise model and achieving identical performances for different noise levels while BA* performance degrades significantly.

\begin{figure}[t]
\centering
\begin{tabular}{cc}
\includegraphics[width=0.45\textwidth]{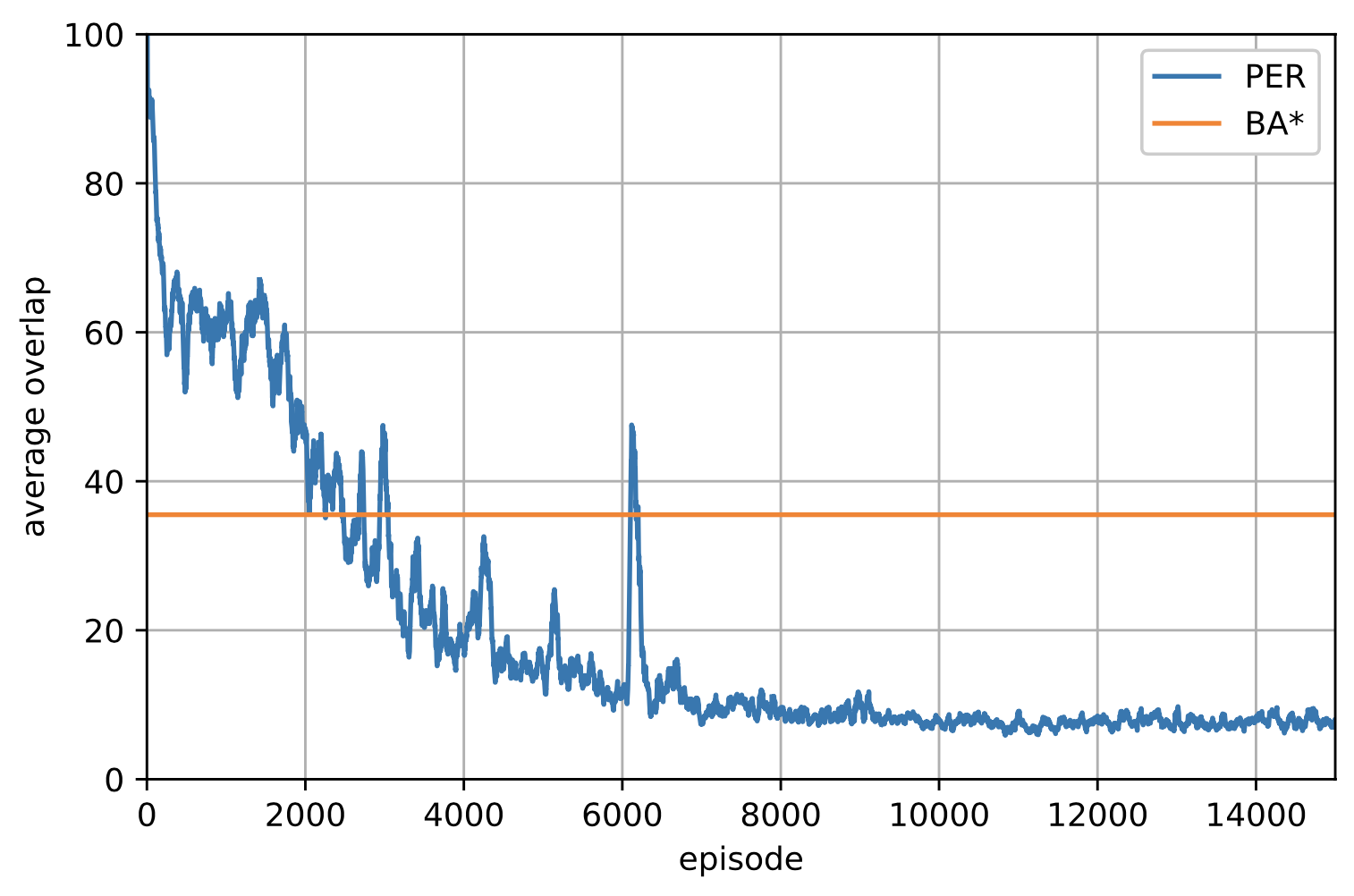} &
\includegraphics[width=0.45\textwidth]{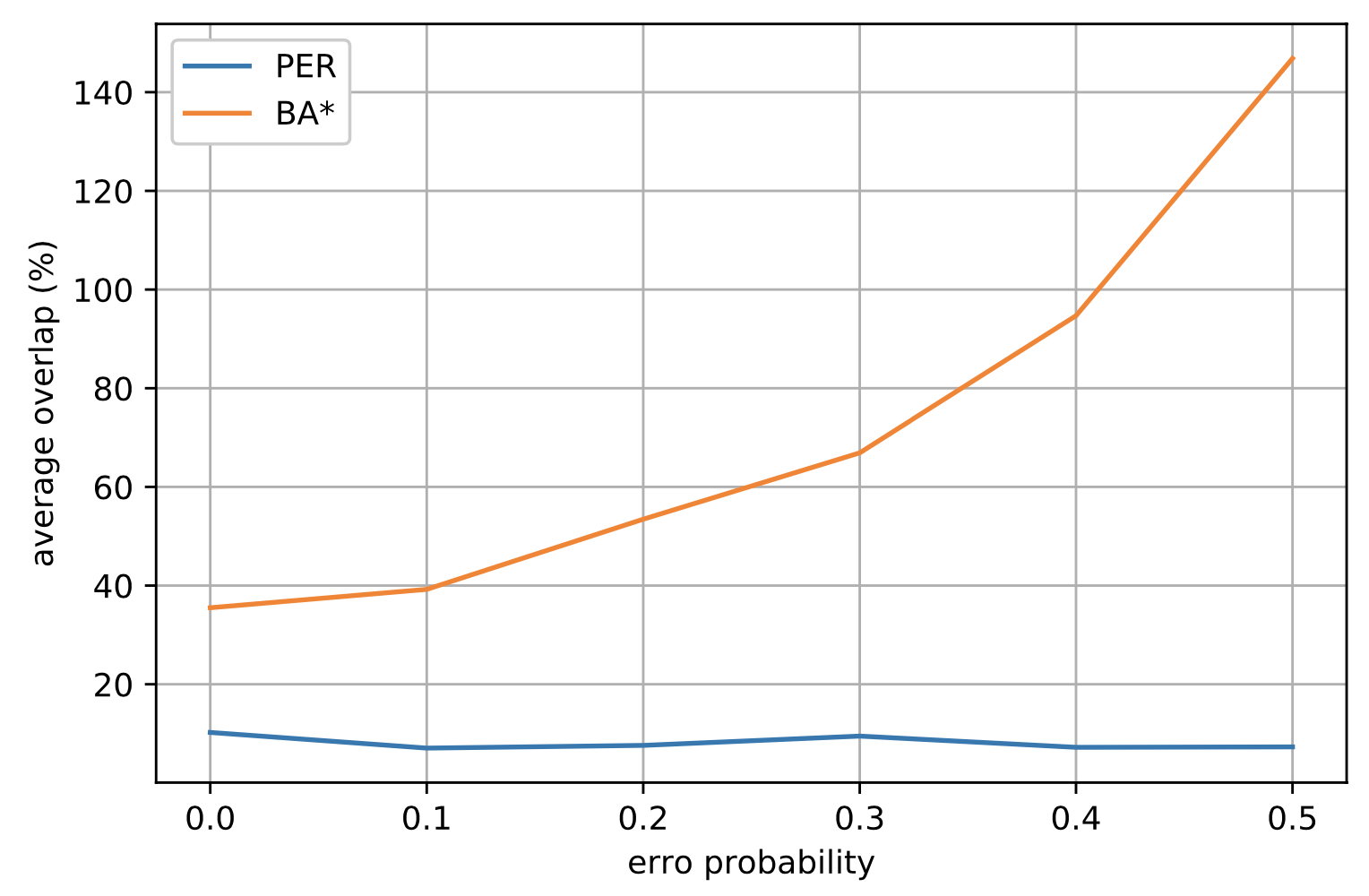}\\
(a) & (b) \\
\end{tabular}
\vspace{-0.1in}
		\caption{(a): Overlap percentage of PER and BA* in a $15\times15$ grid. (b): Comparing average overlap for different noise probabilities.}
\vspace{-0.1in}
		\label{fig:noise}
\end{figure}

\subsubsection{Sensitivity of RL Approaches to Hyper-Parameters}
\label{subsec:sensitivityhyperparameters}

We analyze the sensitivity of different RL algorithms to the hyper-parameters to ensure that for a large range of hyper-parameters the performance remains acceptable, making it more amenable for achieving coverage in real-world robots and environments. For this purpose, we identify a set of hyper-parameters, pick a range of values for each one, and compare the performance of the resulting models in terms of the average cumulative reward after convergence. Specifically, we selected batch size from $\{16,32,64,128\}$, learning rate $\{0.003, 0.001, 0.0003, 0.0001, 0.00003\}$, number of filters in the first and second convolution layers $\{[8,16], [16,32], [32, 64]\}$, the number of the fully-connected layers and the number of nodes in each layer $\{[32], [64], [128], [16,16], [16,32], [32, 32]\}$, discount factor $\{0.9, 0.95, 0.99\}$, number of epochs in PPO~\cite{schulman2017proximal} $\{20,30,40,50\}$, clipping parameter of PPO $\{0.1,0.2,0.3\}$, generalized advantage estimation parameter in PPO $\lambda\in\{0.92,0.95,0.98\}$, PER parameters $\alpha,\beta\in\{0.4,0.6\}$, and memory size $\{50000,100000,500000\}$. 

We select $10$ combinations of these parameters and provide the average reward after convergence in Fig.~\ref{fig:hp}, which demonstrates effective learning across a wide range of hyper-parameters. Specifically, we observed that network architecture and discount factor values that we selected were less decisive, while the batch size and number of epochs for PPO, $\alpha$ and $\beta$ for PER, and memory size for PER and DQN have more impact on the performance. 

\begin{figure}[t]
	\centering
	\includegraphics[width=0.9\textwidth]{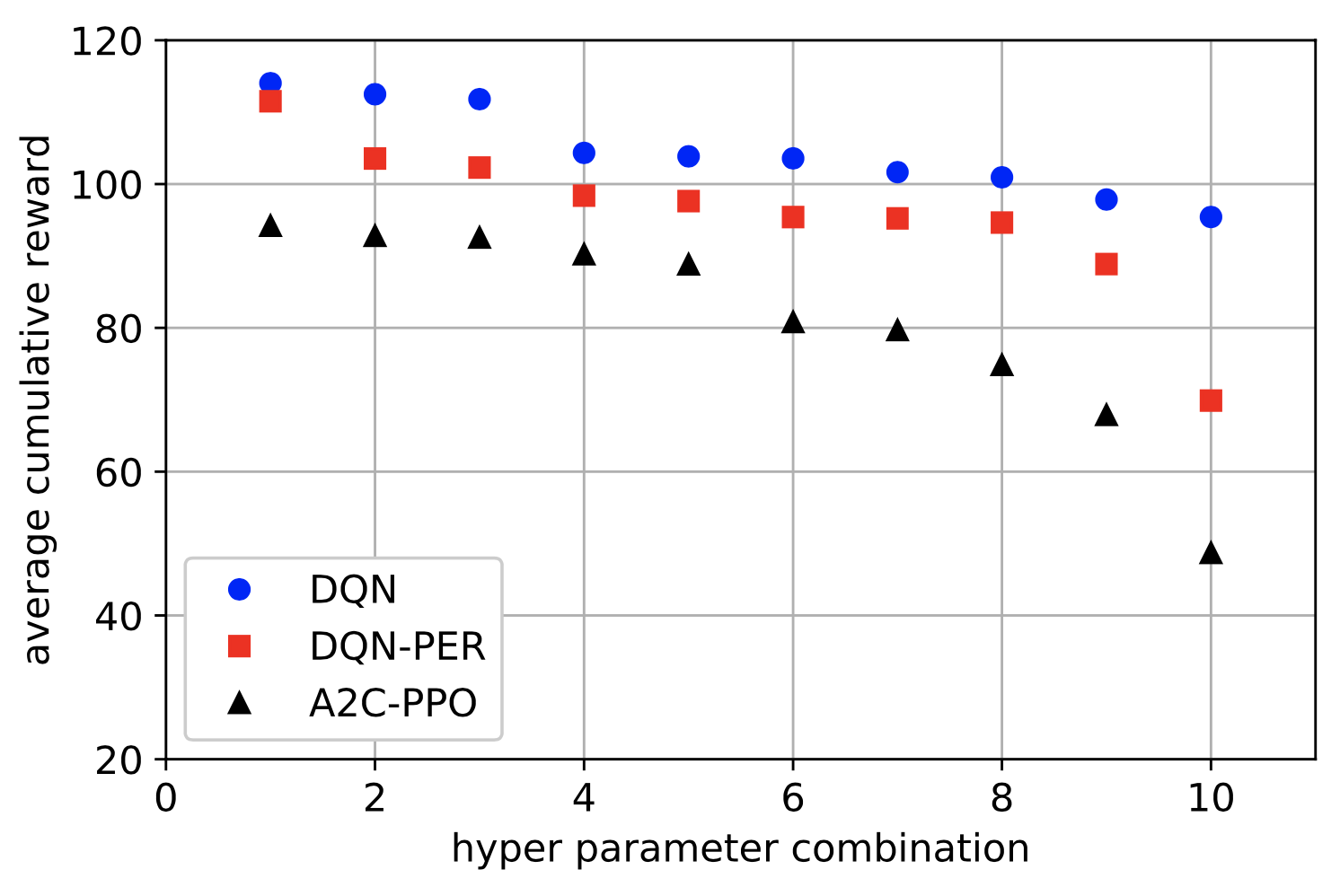} 
	\vspace{-0.1in}
	\caption{Sensitivity of area covering RL agents to hyper parameters.}
	\vspace{-0.2in}
	\label{fig:hp}
\end{figure}

\subsection{Experiments in AWS Robomaker}
\label{subsec:sim:gazebo}

In order to be able to deploy an algorithm on real-world robot, we need to take into account the physical shape of the robot and obstacles as well as the robot movement constraint and its sensing model. To this end, we use AWS Robomaker to generate typical indoor environments, and deploy Turtlebot3 Waffle Pi as the robot. We limit the robot to three actions: move forward and rotate left/right for $90$ degrees. The robot detects the presence of obstacle(s) in its vicinity through its on-board laser sensor. We perform our experiments in $3$ simulated indoor environments with a dimension of $5\times 5m^2$. meters. We have implemented our algorithm in Python programming language using the Robot Operating System (ROS), OpenAI Gym for the robot to interface with its environment(s), and AWS Robomaker for scaling up the simulations.

We compare the performance of DQN+PER (RL)~\cite{osaha2021a,osaha2021b} and Hybrid RL, with BA* in terms of coverage and overlap. During the training process for DQN, we terminate an episode when either the robot has covered $90\%$ of the free area available in the environment or has taken a predefined number of steps. Table~\ref{table3} illustrates the coverage and overlap percentage for BA*, RL and Hybrid RL, averaged over $15$ run. It shows that RL-based coverage algorithms significantly reduce overlap while achieving better or comparable coverage performance compared to BA* algorithm. Hybrid RL reduces the overlap significantly compared to RL, while we've observed $2$ orders of magnitude faster convergence, which is due to the fact that the space of possible policies reduces by constraining the movement of the robot in free area to zigzag.

\begin{table*}
    \centering
    \caption{Comparative performance of BA*, RL, and Hybrid RL.}
    \tabcolsep=0.35cm
    \begin{tabular}{|l|ll|ll|ll|}
         \hline
         \multirow{2}{*}{Method} & \multicolumn{2}{|c|}{Env 1} & \multicolumn{2}{|c|}{Env 2} & \multicolumn{2}{|c|}{Env 3} \\
         \cline{2-7}
         & Cov. & Overlap & Cov. & Overlap & Cov. & Overlap \\
         \hline
         BA* & $90.59$ & $80.61$ & $79.64$ & $87.30$ & $81.44$ & $91.70$ \\
         RL & $86.30$ & $32.69$ & $89.52$ & $17.23$ & $85.31$ & $33.25$ \\
         Hybrid & $90.59$ & $21.48$ & $90.59$ & $14.38$ & $90.59$ & $31.83$ \\
         \hline
    \end{tabular}
    \label{table3}
\end{table*}

\section{Conclusions}
In this study, the coverage path planning algorithm has been formulated as a stochastic optimization problem. After analyzing the structure and properties of the problem, it has been shown to be amenable for deep RL algorithms. Furthermore, in order to deploy RL algorithm in real-world environments, we proposed some practical methods. An efficient state representation ha been proposed to capture the history of information collected in one episode, the reward function has been derived from the structure of the problem, and the reward-shaping technique has been leveraged to expedite the convergence. Moreover, a discount factor for decreasing the gradient estimation variance has been used to further improve the convergence rate and sampling complexity. To address the scalability and universality of the RL algorithms, a hybrid RL algorithm which combines simple zigzag movement with an RL agent have been investigated, and it has been shown that RL algorithms generalize to new unseen environment better. Extensive experiments in grid-world and Gazebo environments verified the effectiveness of the RL algorithms and the proposed hybrid one.

\bibliographystyle{IEEEtran}
\bibliography{ref}

\end{document}